\documentclass{article}


\usepackage[preprint]{neurips_2021}




\usepackage[utf8]{inputenc} 
\usepackage[T1]{fontenc}    
\usepackage{hyperref}       
\usepackage{url}            
\usepackage{booktabs}       
\usepackage{amsfonts}       
\usepackage{nicefrac}       
\usepackage{microtype}      
\usepackage{xcolor}         
\usepackage{caption}
\usepackage{colortbl}
\usepackage{graphicx}
\usepackage{subcaption}
\usepackage{macros}

\title{Fair When Trained, Unfair When Deployed:\\ Observable Fairness Measures are Unstable in Performative Prediction Settings}

%

\author{%
  Alan Mishler \\
  J.P. Morgan AI Research \\
  \texttt{alan.mishler@jpmorgan.com} \\
  \And
  Niccolò Dalmasso \\
  J.P. Morgan AI Research \\
  \texttt{niccolo.dalmasso@jpmorgan.com} \\
}

\begin{document}

\maketitle

\begin{abstract}
Many popular algorithmic fairness measures depend on the joint distribution of predictions, outcomes, and a sensitive feature like race or gender. These measures are sensitive to distribution shift: a predictor which is trained to satisfy one of these fairness definitions may become unfair if the distribution changes. In performative prediction settings, however, predictors are precisely intended to induce distribution shift. For example, in many applications in criminal justice, healthcare, and consumer finance, the purpose of building a predictor is to reduce the rate of adverse outcomes such as recidivism, hospitalization, or default on a loan. We formalize the effect of such predictors as a type of \emph{concept shift}—a particular variety of distribution shift—and show both theoretically and via simulated examples how this causes predictors which are fair when they are trained to become unfair when they are deployed. We further show how many of these issues can be avoided by using fairness definitions that depend on counterfactual rather than observable outcomes.
\end{abstract}

\section{Introduction}
Much of the algorithmic fairness literature is concerned with so-called \emph{observable} or \emph{statistical} fairness criteria. These criteria consider the relationship between the predictor, a sensitive feature, and a strictly observable outcome, such as whether a person recidivates or has an adverse health event. For example, in a binary classification setting with a binary sensitive feature, the criterion of \emph{equalized odds} requires the classifier to have equal true and false positive rates for both groups \citep{hardt_equality_2016}, while \emph{equality of predictive values} requires the classifier to have equal positive and negative predictive values for both groups \citep{mitchell_algorithmic_2021}.

In many settings, however, predictors are designed to inform decisions that affect the very outcomes that are the target of prediction. This is true for example in risk assessment, when the predictor is meant to estimate the risk of an adverse outcome so that a decision maker can intervene in order to preempt that outcome. For example, a judge might choose to detain a defendant pretrial to prevent recidivism, or a doctor might choose to treat a patient to prevent complications. Such \emph{performative} predictors (the term used by \citet{perdomo_performative_2020}) induce distribution shift, which affects predictive performance. In practice, this may be dealt with by periodically retraining the predictor. Recent work has identified conditions under which iteratively retrained performative predictors converge to an equilibrium \citep{perdomo_performative_2020}, while other work has examined how performative predictors, with or without fairness considerations, affect the welfare of groups who are subject to their predictions \citep{ensign_runaway_2018, hu_short-term_2018, hashimoto_fairness_2018, Liu2018, mishler_when_2021, damour_fairness_2020, zhang_how_2020}.

In this paper, we focus on the interaction between performativity and fairness. Specifically, we show that a performative predictor which appears to satisfy an observable fairness criterion when trained may not satisfy it when deployed. This is because performative predictors are designed to change a decision making process, which in turn changes observable outcomes, resulting in \emph{concept shift}. Fairness definitions which depend on observable outcomes may no longer be satisfied when the distribution changes. Although this observation is extremely simple, we have not seen it clearly articulated in the literature. 

Naturally, a predictor which is trained to be fair with respect to one population may be unfair if it is deployed in a different population, but performativity can induce unfairness even when the populations in which the predictor is trained and deployed are the same, i.e. there is no \emph{covariate shift}. Perversely, this effect may be larger the more the predictor affects decision making, even though the point of training a predictor in such a setting is to improve decision making in order to improve outcomes. We argue that counterfactual fairness criteria, which are not sensitive to observable outcomes and hence do not suffer this limitation, are preferable in this type of setting.

The remainder of the paper is organized as follows. We define the problem setting in Section \ref{sec:notation} and discuss related work in Section \ref{sec:background}. We formalize the effects of interest theoretically in Section \ref{sec:theoretical} and with simple simulated examples in Section \ref{sec:examples}. In Section \ref{sec:counterfactual}, we argue that the use of counterfactual outcomes and associated counterfactual fairness criteria avoids these issues. We conclude in Section \ref{sec:conclusion}.

\section{Notation and Problem Setting} \label{sec:notation}
Let $A \in \{0, 1\}$ denote a sensitive feature such as race or sex; $X \in \mathcal{X}$ a set of additional covariates such as medical, criminal, or financial history; $D \in \{0, 1\}$ a binary decision or treatment variable such as whether to hospitalize a patient, detain a defendant, or issue credit; and $Y \in \{0, 1\}$ a binary outcome that depends on $D$, such as patient death, recidivism, or default on a loan. We use ``group 0'' and ``group 1'' to refer to the $A = 0$ and $A = 1$ groups. Note that the issues we identify do not depend on the use of binary variables; our analyses can all be generalized to more complex cases.

We identify potential or \emph{counterfactual} outcomes with the notation $Y^{D=d}$. This refers to the outcome that would be observed if, possibly contrary to fact, the decision variable were set to $d$. For example, if $Y$ represents whether a defendant recidivates, and $D$ represents whether a defendant is released ($D=0$) or detained ($D=1$) pre-trial, then $Y^{D=0}$ represents whether a defendant would recidivate if released, while $Y^{D=1}$ represents whether they would recidivate if detained. (Presumably, we'd have $Y^{D=1} \equiv 0$, insofar as recidivism is not possible under detention.) 

Importantly, we only observe a single outcome for each individual, since each individual is either detained or not. The other potential outcome remains counterfactual. This is the ``fundamental problem of causal inference'' \citep{Holland1986}. More formally, we make the following assumption:
\begin{itemize}
    \item[(A1)] $Y = (1 - D)Y^{D=0} + DY^{D=1}$
\end{itemize}
(A1) says that for each individual, the outcome that is observed is the potential outcome corresponding to the treatment they actually receive, meaning for example that an individual's outcome does not depend on the treatment status of others. See \citet{Holland1986, rubin_causal_2005} for an overview of the potential outcomes framework.

We suppose that a classifier $R: \mathcal{A} \times \mathcal{X} \mapsto \{0, 1\}$ is trained to predict $Y$ from the covariates, and that $R$ is then deployed to support future decisions $D$ which may themselves affect $Y$. We contrast the accuracy and fairness of $R$ with respect to $Y$ at two time points $t$: before it is deployed (``pre''), i.e. at training time; and after it is deployed (``post''), i.e. with respect to some future test data distribution. We use subscripts $t$ to indicate time, so that for example $\Pb_\text{pre}$ and $\E_\text{pre}$ refer to the distribution of the training data and expectations over that distribution, respectively, while $\Pb_\text{post}$ and $\E_\text{post}$ refer to the corresponding quantities in the test data. Quantities without subscripts are assumed not to change over time.

\section{Background and Related Work} \label{sec:background}
\subsection{Fairness criteria}
Many fairness criteria are defined with respect to the joint distribution of $A$, $R$, and $Y$. We consider popular fairness criteria that depend on the following quantities: the group-specific \emph{prediction rates} ($\Pb(R = 1 \mid A)$); \emph{positive predictive values} (PPVs: $\Pb(Y = 1 \mid A, R = 1)$) and \emph{negative predictive values} (NPVs: $\Pb(Y = 0 \mid A, R = 0)$); and the \emph{error rates}, meaning the \emph{false positive rates} (FPRs: $\Pb(R = 1 \mid A, Y = 0)$), and \emph{false negative rates} (FNRs: $\Pb(R = 0 \mid A, Y = 1)$).

The criterion of \emph{demographic parity} requires $R \ind A$, so that the prediction rates are the same for group 0 and group 1. \emph{Sufficiency} or \emph{equality of predictive values} requires that $Y \ind A \mid R$, so that the PPVs are the same for group 0 and group 1, and likewise for the NPVs. Finally, \emph{separation} or \emph{equalized odds} requires that $R \ind A \mid Y$, so that the FPRs and FNRs are the same for the two groups \citep{hardt_equality_2016, barocas-hardt-narayanan, mitchell_algorithmic_2021}. 

Counterfactual versions of these fairness-related quantities may be defined by for example substituting $Y^{D=0}$ for $Y$. This substitution makes sense in risk assessment settings, where the goal is to estimate the risk of an adverse outcome absent some intervention, and where the use of observable fairness criteria in these settings can be misleading \citep{coston_counterfactual_2020, mishler_fairness_2021}.

\subsection{Dataset shift}
\textit{Dataset shift}, or distribution shift, generally refers to a difference in distribution between training data and test data \citep{quinonero2009dataset, moreno2012unifying}. In our setting, dataset shift may occur across the two time points. The two main types of dataset shift studied in the literature are:
\begin{itemize}
    \item \textit{covariate shift}, when the marginal distribution of the features changes in time but the conditional distribution of the response given the features does not:
    $$
    \quad \mathbb{P}_\text{pre} (A, X) \neq \mathbb{P}_\text{post} (A, X), \text{ but } \mathbb{P}_\text{pre} (Y|A, X) = \mathbb{P}_\text{post}(Y|A, X).
    $$
    \item \textit{concept shift} (or concept drift), when the marginal distribution of the features remains the same but the conditional distribution of the response given the features changes:
     $$
    \mathbb{P}_\text{pre} (A, X) = \mathbb{P}_\text{post}(A, X), \text{ but } \mathbb{P}_\text{pre} (Y|A, X) \neq \mathbb{P}_\text{post} (Y|A, X).
    $$
\end{itemize}
Detecting and mitigating covariate shift with respect to predictor performance is an active area of research \citep{shimodaira2000improving, sugiyama2007covariate, gretton2008covariate, tibshirani2020conformal, hu2020distributionfree}, and likewise for concept shift \citep{vorburger2006entropy, webb2018analyzing, vovk2020testing}. In the fairness literature, \cite{singh2021fairnesscovariateshift} and \cite{rezaei2021fairnesscovariateshift} have studied the effect of covariate shift on fair classifiers and how to mitigate it. In our work, by contrast, we show that introducing a predictor into a decision making context can induce concept shift for the response $Y$ from pre- to post-deployment, even when no covariate shift is present. This concept shift can affect both the accuracy and fairness of the predictor. 

\subsection{Performative prediction and risk assessment}
In many cases, the purpose of training a predictor is to improve decision making in order to improve overall outcomes. When a predictor is optimized for observable outcomes in such settings, then it is \emph{performative} \citep{perdomo_performative_2020}: the predictor affects the very outcomes it aims to predict. One common setting where performative prediction occurs is risk assessment, in which the predictor targets an adverse outcome such as recidivism or a negative health event. Previous work has illustrated how optimizing predictors for observable accuracy in risk assessment can worsen rather than improving outcomes \citep{mishler_when_2021}. Here, we analyze fairness rather than accuracy. While previous work has shown that observable fairness criteria can be misleading in performative settings \citep{coston_counterfactual_2020}, we show how performativity causes predictors to fail to satisfy the very criteria they are trained to satisfy once they are introduced into a decision making context.

\cite{perdomo_performative_2020} developed conditions under which an iteratively retrained predictor which targets observable outcomes will converge to an equilibrium. By contrast, we propose that in performative contexts, predictors should target counterfactual outcomes, which under reasonable conditions bypasses the issue of performativity and avoids the need for retraining.

In the next section, we illustrate how a change in the decision process (equivalently the ``treatment propensity'') can induce concept shift, which in turn can change the predictive values and error rates. This in turn can cause $R$ to become (more) unfair with respect to equalized odds or equality of predictive values.

\section{Theoretical analysis} \label{sec:theoretical}
If $\Pb_\text{pre}$ and $\Pb_\text{post}$, the data generating processes at $t=\text{pre}$ and $t=\text{post}$, can differ arbitrarily, then the fairness and accuracy of $R$ can also differ arbitrarily across the two distributions. For example, if R is trained on one population and then deployed in a different population, then $\Pb_\text{pre}(A, X)$, the distribution of the covariates at $t=\text{pre}$, may be completely different than at $t=\text{post}$, which may affect how $R$ performs.

However, changes in accuracy and fairness are still likely to occur even if the two populations are identical and all that changes across the two time points is how decisions $D$ are made. To formalize this, let $U$ be a set of unobserved variables such that the vector $W = (A, X, U)$ is sufficient to deconfound the treatment process from the potential outcomes. That is, for $d \in \{0, 1\}$:
\begin{align*}
    D \ind Y^{D=d} \mid W
\end{align*}
In general, outside the context of randomized experiments, decisions are not marginally independent of potential outcomes, i.e. $D \not\ind Y^{D=d}$. For example, in the recidivism setting, judges aim to detain precisely those defendants who are at higher risk of recidivism were they to be released, meaning that $\Pb(D = 1 \mid Y^{D=0} = 1) > \Pb(D = 1 \mid Y^{D=0} = 0)$. In order for the condition above to hold, $W$ should include all observed and unobserved variables that are relevant to both the decision process and the outcome. Another way of understanding this is that conditional on $W$, the treatment assignment is essentially random.

For the remainder of the paper, we make the following assumption:
\begin{itemize}
    \item[(A2)] No covariate shift: $\mathbb{P}_\text{pre}(A, X, R, U, Y^{D=0}, Y^{D=1}) \equiv \mathbb{P}_\text{post}(A, X, R, U, Y^{D=0}, Y^{D=1})$
\end{itemize}
The inclusion of the potential outcomes $Y^{D=0}$ and $Y^{D=1}$ means that the population does not change either in terms of (un)observed covariates \emph{or} in terms of responsiveness to different treatments. Under assumption (A1), this means that the only way that outcomes can change is if the decision process changes. We make this point to emphasize that instability in observable fairness is intrinsic to this problem setting, even when the predictor is applied on exactly the same population on which it was trained. For convenience, define the following quantities: 
\begin{align*}
    & \pi_t(W) = \Pb_t(D = 1 \mid W) \tag{Treatment propensity at time $t$} \\
    & \mu^{D=0}(W) = \E[Y^{D=0} \mid W] \tag{Outcome regression for $Y^{D=0}$} \\
    & \mu^{D=1}(W) = \E[Y^{D=1} \mid W] \tag{Outcome regression for $Y^{D=1}$}
\end{align*}
The quantities $\mu^{D=0}(W)$ and $\mu^{D=1}(W)$ are not indexed by $t$ because under (A2) they do not change over time. Again, only the treatment propensity $\pi_t$ is allowed to change, reflecting the influence of the predictor $R$ on the decision process once it is deployed. Since $R$ is a deterministic function of $A$ and $X$, and $(A, X) \subseteq W$, we could equivalently write $\pi_\text{post}(W)$ as $\pi_\text{post}(W) = \Pb_\text{post}(D = 1 \mid W, R)$, but we choose the simpler form for consistency with $\pi_\text{pre}$.

The subsequent propositions show how changes in the treatment propensity $\pi_t$ from pre- to post-deployment can give rise to concept shift and changes in fairness. All proofs are in the appendix.
\begin{prop}[Concept shift] \label{prop:concept_shift}
\begin{align}
    &\E_t[Y \mid A, X] = \E\left\{\E[\gamma_t(W) \mid A, X]\right\}, \-\ \text{where} \nonumber \\
    &\gamma_t(W) = (1-\pi_t(W))\mu^{D=0}(W) + \pi_t(W)\mu^{D=1}(W) \label{gamma_t}
\end{align}
\end{prop}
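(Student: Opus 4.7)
The plan is to derive the formula by conditioning on the deconfounding vector $W = (A, X, U)$ and exploiting the potential outcomes framework together with the ignorability condition $D \ind Y^{D=d} \mid W$.

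First, I would use the tower property. Since $(A, X) \subseteq W$, we have
\begin{align*}
    \E_t[Y \mid A, X] = \E_t\bigl[\E_t[Y \mid W] \mid A, X\bigr].
\end{align*}
This reduces the problem to identifying $\E_t[Y \mid W]$ as $\gamma_t(W)$.

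Second, I would substitute (A1), so that
\begin{align*}
    \E_t[Y \mid W] = \E_t\bigl[(1-D) Y^{D=0} + D Y^{D=1} \mid W\bigr].
\end{align*}
Splitting this by linearity into a sum of two terms, each of which is of the form $\E_t[f(D) Y^{D=d} \mid W]$, I would invoke the deconfounding assumption $D \ind Y^{D=d} \mid W$ to factor each conditional expectation into $\E_t[f(D) \mid W] \cdot \E_t[Y^{D=d} \mid W]$. The first factor evaluates to $1 - \pi_t(W)$ or $\pi_t(W)$, respectively, and by (A2) the second factor is time-invariant and equals $\mu^{D=d}(W)$. Combining gives $\E_t[Y \mid W] = \gamma_t(W)$.

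Substituting back into the tower expansion yields $\E_t[Y \mid A, X] = \E[\gamma_t(W) \mid A, X]$, which is the statement of the proposition (the outer expectation in the stated form is vacuous since its argument is already $\sigma(A, X)$-measurable, but including it emphasizes that the result is obtained by marginalizing $U$ out of $\gamma_t(W)$). The only nontrivial step is the factorization of $\E_t[D Y^{D=d} \mid W]$, which depends crucially on conditional ignorability; otherwise the derivation is a straightforward application of (A1), (A2), and iterated expectations.
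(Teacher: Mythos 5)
Your proof is correct and follows essentially the same route as the paper's: apply (A1) to decompose $Y$ into potential outcomes, use the deconfounding condition $D \ind Y^{D=d} \mid W$ to factor the conditional expectation given $W$ into $\gamma_t(W)$, and then apply iterated expectations to pass from $W$ to $(A, X)$. Your added remark that the outer expectation in the displayed statement is vacuous is accurate and consistent with the paper's own proof, which omits it.
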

From \eqref{gamma_t}, it is easy to see that if $\pi_\text{post}(W) \not\equiv \pi_\text{pre}(W)$, then $\E_\text{post}[Y \mid A, X]$ will in general not equal $\E_\text{pre}[Y \mid A, X]$: changes in the treatment propensity induce concept shift.

We now turn to the fairness metrics discussed above. In the absence of covariate shift, the prediction rates do not change over time, since they don't involve outcomes. However, concept shift will generally induce a change in the predictive values and error rates.

\begin{prop}[Prediction rates] \label{prop:demo_parity}
Under assumption (A2), $\E_\text{pre}[R \mid A] \equiv \E_\text{post}[R \mid A]$.
\end{prop}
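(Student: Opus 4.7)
The plan is to observe that this proposition is essentially an immediate consequence of (A2), once we unpack what $\E[R \mid A]$ depends on. Since the classifier $R: \mathcal{A} \times \mathcal{X} \to \{0,1\}$ is a deterministic function of $(A, X)$, the quantity $\E_t[R \mid A]$ is a functional purely of the conditional distribution of $X$ given $A$ at time $t$ (or, even more directly, of the joint distribution of $(A, R)$ at time $t$).

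The first step is to recall that (A2) states that the joint distribution of $(A, X, R, U, Y^{D=0}, Y^{D=1})$ is identical at $t = \text{pre}$ and $t = \text{post}$. The second step is to marginalize this joint over all coordinates other than $(A, R)$, which gives $\Pb_\text{pre}(A, R) \equiv \Pb_\text{post}(A, R)$. The conditional expectation $\E_t[R \mid A]$ depends only on this bivariate marginal, so the equality $\E_\text{pre}[R \mid A] \equiv \E_\text{post}[R \mid A]$ follows immediately. Equivalently, one can write $\E_t[R \mid A = a] = \int R(a, x)\, d\Pb_t(x \mid A = a)$ and note that (A2) preserves the conditional $\Pb_t(X \mid A)$, after which a one-line calculation finishes the argument.

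There is no real obstacle here: the proposition is essentially a bookkeeping observation that the prediction rates involve neither the treatment $D$ nor the outcome $Y$, and hence are insensitive to the only source of shift permitted in the model (namely a change in the treatment propensity $\pi_t$). The role of the proposition in the paper is contrastive—it sets up the subsequent results by making explicit that, among the observable fairness-related quantities, only those depending on $Y$ are destabilized by performativity, while demographic parity (which depends only on $\Pb(R \mid A)$) is preserved under (A2).
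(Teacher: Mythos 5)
Your proof is correct and takes essentially the same route as the paper, which also reduces the claim to the invariance of $\Pb_t(X \mid A)$ under (A2) via $\E_t[R \mid A] = \int_{\mathcal{X}} R \, d\Pb_t(X \mid A)$. Your observation that one can marginalize (A2) directly to $\Pb_t(A, R)$ (since $R$ appears explicitly in the preserved joint) is a slightly more direct phrasing of the same bookkeeping argument.
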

It follows immediately from Proposition \ref{prop:demo_parity} that a predictor that achieves demographic parity at training time also achieves demographic parity post-deployment; that is, concept shift does not affect demographic parity.

\begin{prop}[Predictive values] \label{prop:predictive_values}
Under (A1)-(A2), the PPV and NPV of $R$ for group $A = a$ at time $t$ are given by
\begin{align*}
    \E_t[Y \mid A = a, R = 1] &= \frac{\E[\E_t[Y \mid A, X]\one\{A = a\}R]}{\E[\one\{A=a\}R]} \\
    \E_t[1 - Y \mid A = a, R = 0] &= 1 - \frac{\E[\E_t[Y \mid A, X]\one\{A = a\}(1-R)]}{\E[\one\{A=a\}(1-R)]}
\end{align*}
\end{prop}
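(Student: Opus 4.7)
The plan is to exploit two structural facts about the setup: that $R$ is a deterministic function of $(A,X)$, and that by (A2) the joint distribution of $(A,X,R)$ does not depend on $t$. The only ``time-varying'' object in the proposition is the conditional mean $\E_t[Y\mid A,X]$, whose $t$-dependence was already quantified by Proposition~\ref{prop:concept_shift}. Nothing else moves.

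First I would handle the PPV. Using the binary nature of $R$ and the fact that $\{A=a,R=1\}=\{A=a\}\cdot R$ (as a product of indicators), write the conditional expectation as a ratio:
\begin{align*}
\E_t[Y\mid A=a, R=1] = \frac{\E_t[Y\,\one\{A=a\}\,R]}{\E_t[\one\{A=a\}\,R]}.
\end{align*}
Then apply the tower property, conditioning on $(A,X)$: since $\one\{A=a\}\,R$ is a $\sigma(A,X)$-measurable function (because $R$ is a deterministic function of $(A,X)$), it pulls out of the inner expectation, giving $\E_t[Y\,\one\{A=a\}\,R] = \E_t[\E_t[Y\mid A,X]\,\one\{A=a\}\,R]$. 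By (A2), the outer expectation over $(A,X)$ does not depend on $t$, nor does the denominator $\E_t[\one\{A=a\}R]$, which only depends on the distribution of $(A,X)$. Dropping the subscript from the outer expectations yields the claimed formula.

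The NPV follows by the same argument with $R$ replaced by $1-R$ and $Y$ replaced by $1-Y$. Concretely, $\E_t[1-Y\mid A=a,R=0] = 1 - \E_t[Y\mid A=a,R=0]$, and applying the ratio/tower identity to $\E_t[Y\mid A=a,R=0]$ gives the formula after subtraction from $1$.

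There is no real obstacle here; the only thing to be a bit careful about is the bookkeeping between $\E_t$ and $\E$. Specifically, one must recognize that after conditioning on $(A,X)$, the \emph{only} factor that depends on $t$ is $\E_t[Y\mid A,X]$ — the indicators $\one\{A=a\}$ and $R$ are time-invariant functions of $(A,X)$, and by (A2) the law of $(A,X)$ is time-invariant, so the outer expectation and both denominators can be written without a $t$ subscript. Once that is pinned down, the derivation is two lines per case.
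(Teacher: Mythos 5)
Your proposal is correct and follows essentially the same route as the paper's proof: express the conditional expectation as a ratio of indicator-weighted expectations, apply iterated expectation over $(A,X)$ pulling out the $\sigma(A,X)$-measurable factor $\one\{A=a\}R$, and invoke (A2) to drop the $t$ subscript from the outer expectations and the denominator. Your version is somewhat more explicit about the measurability and time-invariance bookkeeping, but the argument is the same.
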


\begin{prop}[Error rates] \label{prop:error_rates}
The FPR and FNR of $R$ for group $A = a$ at time $t$ are given by
\begin{align*}
    \E_t[R \mid A = a, Y = 0] &= \frac{\E\left[R\one\{A=a\}\left(1 - \E_t[Y \mid A, X]\right)\right]}{\E\left[\one\{A=a\}\left(1 - \E_t[Y \mid A, X]\right)\right]} \\
    \E_t[1 - R \mid A = a, Y = 1] &= \frac{\E\left[(1-R)\one\{A=a\}\E_t[Y \mid A, X]\right]}{\E\left[\one\{A=a\}\E_t[Y \mid A, X]\right]}
\end{align*}
\end{prop}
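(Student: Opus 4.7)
The plan is to write each error rate as a ratio of joint expectations and then eliminate $Y$ via the tower property, pushing all the time-dependence into the single factor $\E_t[Y \mid A, X]$ whose behavior is already characterized by Proposition \ref{prop:concept_shift}.

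For the FPR, I would start from the definition of a conditional expectation on a binary event:
\[
\E_t[R \mid A=a, Y=0] = \frac{\E_t[R\,\one\{A=a\}(1-Y)]}{\E_t[\one\{A=a\}(1-Y)]},
\]
which is valid because $\one\{A=a\}$ and $\one\{Y=0\}=1-Y$ together determine the conditioning event. Since $R$ is a deterministic function of $(A,X)$, the product $R\,\one\{A=a\}$ is $(A,X)$-measurable, so I would condition the inner expectation on $(A,X)$ and pull this factor out:
\[
\E_t[R\,\one\{A=a\}(1-Y)] = \E_t\!\left[R\,\one\{A=a\}\bigl(1-\E_t[Y\mid A,X]\bigr)\right].
\]
Under assumption (A2), the joint law of $(A,X,R)$ is identical at both time points, so the outer $\E_t$ reduces to an unindexed $\E$, giving exactly the numerator in the statement. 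Taking $R \equiv 1$ in the same calculation yields the denominator.

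For the FNR, I would run the mirror-image argument starting from
\[
\E_t[1-R \mid A=a, Y=1] = \frac{\E_t[(1-R)\one\{A=a\}Y]}{\E_t[\one\{A=a\}Y]},
\]
again conditioning on $(A,X)$ to replace $Y$ by $\E_t[Y\mid A,X]$ inside, while pulling the $(A,X)$-measurable factor $(1-R)\one\{A=a\}$ outside the inner conditional, and once more invoking (A2) to drop the subscript on the outer expectation.

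There is no real obstacle: the argument is essentially the tower property combined with the observation that $R$ is $(A,X)$-measurable. The only point that requires care is the bookkeeping that isolates the time-varying factor $\E_t[Y\mid A,X]$---whose dependence on $t$ is precisely the concept shift of Proposition \ref{prop:concept_shift}---from the time-invariant quantities $A$, $X$, and $R$ whose joint distribution is fixed by (A2). Assumption (A1) is not needed directly in this derivation, but enters implicitly if one later expands $\E_t[Y\mid A,X]$ in terms of the treatment propensity $\pi_t$ and the outcome regressions $\mu^{D=0}, \mu^{D=1}$ via Proposition \ref{prop:concept_shift}.
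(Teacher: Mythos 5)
Your proposal is correct and follows essentially the same route as the paper's proof: both write the error rate as a ratio of expectations of indicator products, apply iterated expectation conditioning on $(A,X)$ to pull out the $(A,X)$-measurable factors and replace $Y$ by $\E_t[Y \mid A, X]$, and then invoke the time-invariance of the law of $(A,X)$ to drop the subscript on the outer expectation. Your remark that the outer expectation de-indexes because of (A2) is in fact more accurate than the paper's own proof, which cites (A1) for this step in what appears to be a typo.
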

Combining Propositions \ref{prop:concept_shift}, \ref{prop:predictive_values}, and \ref{prop:error_rates}, we see that if $\pi_\text{pre} \not\equiv \pi_\text{post}$, then the predictive values and error rates may change from pre- to post-deployment. If they change differentially for the $A = 0$ and $A = 1$ groups, then a predictor which is fair pre-deployment will be unfair (or less fair) post-deployment. This is illustrated in the next section.

\section{Example Setting} \label{sec:examples}
We provide a simple example that illustrates how predictors which are fair when trained can become unfair when deployed.

For the purposes of setting up the problem, let $f(p) = \frac{10p}{1 - p + 10p}$, for $p \in (0, 1)$. That is, $f$ multiplies the odds of $p$ by 10. 

In addition to the binary sensitive feature $A$, decision $D$, and outcome $Y$, suppose we have covariates $X = (X^{(1)}, X^{(2)})$, with $X^{(1)} \in \{0, 1\}$ and $X^{(2)} \in \R$. Figure \ref{fig:DAGs}(a) shows a causal DAG representing the data-generating process that produces the training data. There are no unobserved confounders. The variable $X^{(2)}$ is independent of all the other variables, while the decision and outcome both depend on $X^{(1)}$. The specific parameters of this process at time $t=\text{pre}$ are as follows:
\begin{align*}
    A &\sim \Bern(0.5) \\
    X^{(1)} \mid A &\sim \Bern(0.8 - 0.2A) \\
    X^{(2)} &\sim N(0, 1) \\
    D \mid X^{(1)} &\sim \Bern(0.3 + 0.5X^{(1)}) \\
    Y^{D=0} \mid X^{(1)} &\sim \Bern(0.3 + 0.5X^{(1)}) \\
    Y^{D=1} \mid X^{(1)} &\sim \Bern(f(0.3 + 0.5X^{(1)}))
\end{align*}
\begin{figure}
    \centering
    \begin{subfigure}{0.3\textwidth}
    \includegraphics[width=\linewidth]{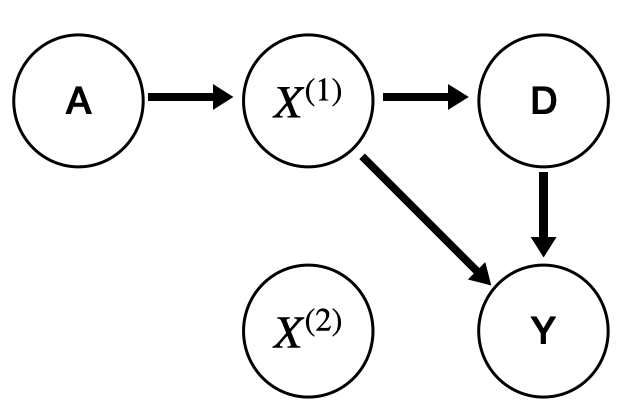}
    \caption{DAG at time $t=\text{pre}$ for the example setting.}
    \end{subfigure}
    \begin{subfigure}{0.3\textwidth}
    \includegraphics[width=\linewidth]{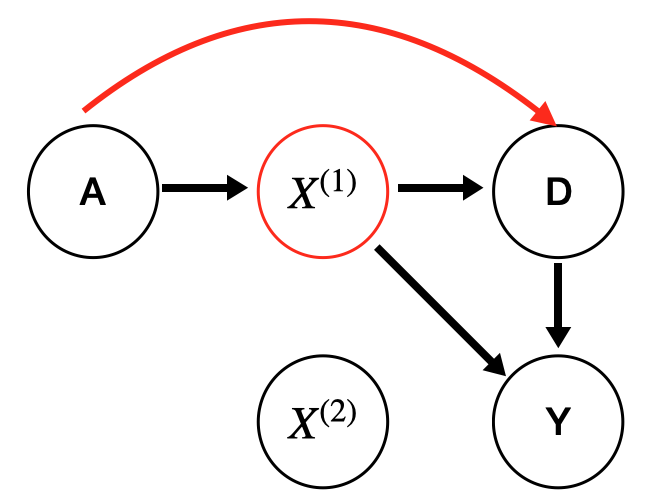}
    \caption{DAG at time $t=\text{post}$ for Predictor 1 in the example setting.}
    \end{subfigure}
    \begin{subfigure}{0.3\textwidth}
    \includegraphics[width=\linewidth]{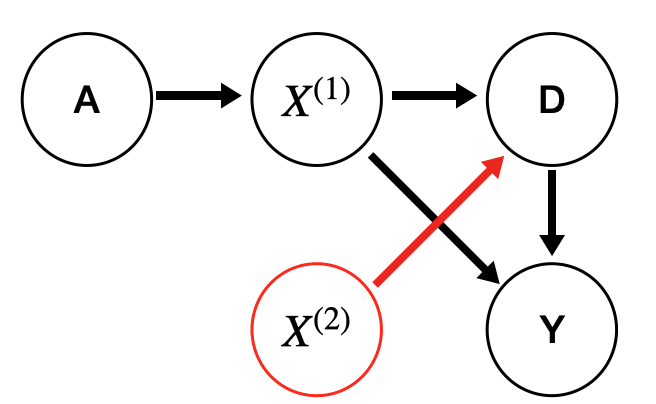}
    \caption{DAG at time $t=\text{post}$ for Predictor 2 in the example setting.}
    \end{subfigure}
    \caption{DAGs for the example setting. Red circles indicate that the predictor $R$ is a function of $X^{(1)}$ (predictor 1) or $X^{(2)}$ (predictor 2). Red arrows indicate causal connections at time $t=\text{post}$ that are not present at time $t=\text{pre}$, due to the influence of $R$.}
    \label{fig:DAGs}
\end{figure}
As one possible interpretation of this setting, suppose we are interested in consumers applying for a loan. Let $A$ be sex, let $X^{(1)}$ indicate whether an applicant has high income, let $D$ indicate whether the loan is approved, and let $Y$ represent whether the applicant becomes a homeowner within a specified time window.

80\% of applicants in group 0 are high income, vs. 60\% in group 1. High income increases the likelihood of receiving a loan ($\Pb(D = 1)$), and there is no difference in this propensity based on sex. Without the loan, low-income applicants have a $30\%$ chance of becoming a homeowner, while high-income applicants have an 80\% chance. With the loan, the odds of becoming a homeowner are multiplied by 10.

Suppose that a predictor $R$ is introduced to help lenders decide whether to issue a loan. In many high-stakes settings, predictive tools cannot legally be used to render automatic decisions, and decision makers have full discretion to utilize information from a predictive tool in a manner they see fit \citep{green2021FlawsPoliciesRequiring}. Hence, decision making processes can in principle change arbitrarily after the introduction of a predictor. We examine the behavior of two possible predictors $R$, coupled with two possible changes in the decision process that result when $R$ is deployed. Since our interest is in illustrating how changes in the decision process can render a ``fair'' predictor unfair in deployment, we do not belabor the mechanism by which $R$ changes the decision process. Each predictor is trained using a training set of size 10,000.

\subsection{Predictor 1}
The first predictor we consider is defined by $R(A, X^{(1)}) = \widehat{\E}_\text{pre}[Y \mid X^{(1)}]$. Since this $R$ is a function of $X^{(1)}$ only, it follows from Figure \ref{fig:DAGs}(a) that at time $t=\text{pre}$, $Y \ind A \mid R$, meaning $R$ satisfies equality of predictive values.

For simplicity, the only quantity that changes over time in this scenario is $\Pb_t(D = 1 \mid A = 1, R = 0)$, the loan approval rate for applicants in group 1 with a negative prediction. Relative to $t=\text{pre}$, the odds of approval for applicants in this group at $t=\text{post}$ is multiplied by a value ranging from 1 to 10,000. This results in an increase in the loan approval probability for this group of between 0 and roughly 0.60. Although this example is simplified for the purposes of illustration, this increase could arise as a form of affirmative action, in which loan officers increase approvals for applicants in the disadvantaged group (the group with lower overall income) who might otherwise not become homeowners.

Figure \ref{fig:predictor1} shows the PPV, NPV, and accuracy at $t=\text{post}$ for groups 0 and 1, as well as the absolute differences between the two groups. When there is no change in approval rates from pre to post (the point 0 on the x-axis), the PPV and NPV remain the same for the two groups. As the change in loan approval rates for group 1 increases, the NPVs for this group decrease, which causes the difference in NPVs between the two groups to increase. That is, $R$ becomes less and less fair at $t=\text{post}$, according to the equality of predictive values criterion. Furthermore, the greater the impact of $R$ on the decision process, the worse the accuracy becomes for group 1. This causes the difference in accuracy between the two groups to increase.

\begin{figure}[ht]
    \centering
    \includegraphics[width=\linewidth]{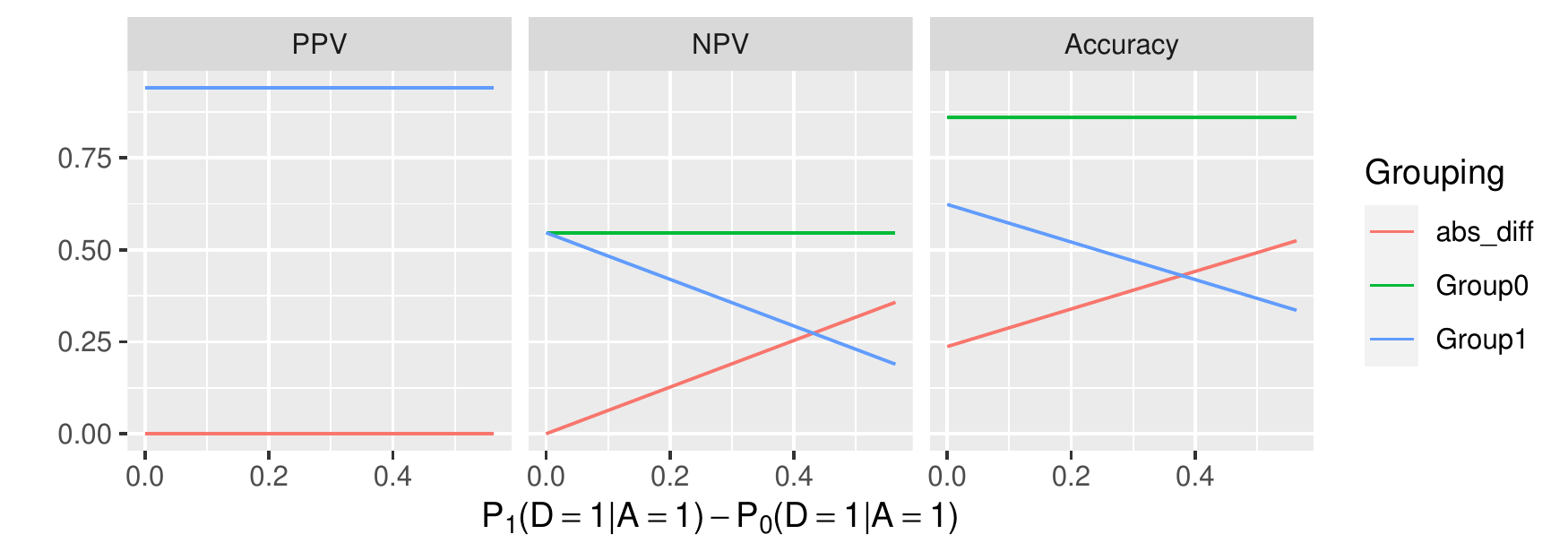}
    \caption{(Predictor 1) Positive predictive values (PPVs), negative predictive values (NPVs), and accuracy for the two groups at time $t=\text{post}$. abs\_diff represents the absolute difference between the two groups in the relevant metric. The x-axis represents the change in loan approval rates for group 1 at $t=\text{post}$ relative to $t=\text{pre}$, with all other components of the decision process held constant. $R$ is fair according to equality of predictive values at $t=\text{pre}$. The greater the impact of $R$ on the decision process, the less fair with respect to equality of predictive values $R$ becomes, and the more inaccurate, at $t=\text{post}$.}
    \label{fig:predictor1}
\end{figure}

\subsection{Predictor 2}
The second predictor we consider is $R(X_2) = \one\{X^{(2)} \geq 0.5\}$. From Figure \ref{fig:DAGs}(c), it follows that at $t=\text{pre}$, $R \ind A \mid Y$, meaning $R$ satisfies equalized odds.

Once again, for simplicity we only vary one component of the decision process: $\Pb_t(D = 1 \mid A = 1, R = 1)$, the loan approval rate for applicants in group 1 with a positive prediction. Relative to $t=\text{pre}$, the odds of approval for applicants in this group at $t=\text{post}$ is multiplied by a value ranging from 1 to 10,000. This results in an increase in the loan approval rate for this group of between 0 and roughly 0.32. This could represent a different type of affirmative action from the previous section, in which loan approvals are increased for the complementary subset of applicants in the disadvantaged group, namely those who are predicted to achieve home ownership.

Figure \ref{fig:predictor2} shows the FPRs, FNR, and accuracy at $t=\text{post}$ for groups 0 and 1, as well as the absolute difference between the two groups. When there is no change in approvals, the FPR and FNR remain the same for the two groups (with some slight differences observed due to sampling error). As the change in loan approval rates for group 1 increases, the FPRs and FNRs for this group decrease, which causes the difference in error rates between the two groups to increase. This means that $R$ becomes less and less fair at $t=\text{post}$, according to the equalized odds criterion. In this case, the accuracy of $R$ improves as the loan approval rate for group 1 increases, although this results in an increasing difference in accuracy between the two groups.

\begin{figure}[ht]
    \centering
    \includegraphics[width=\linewidth]{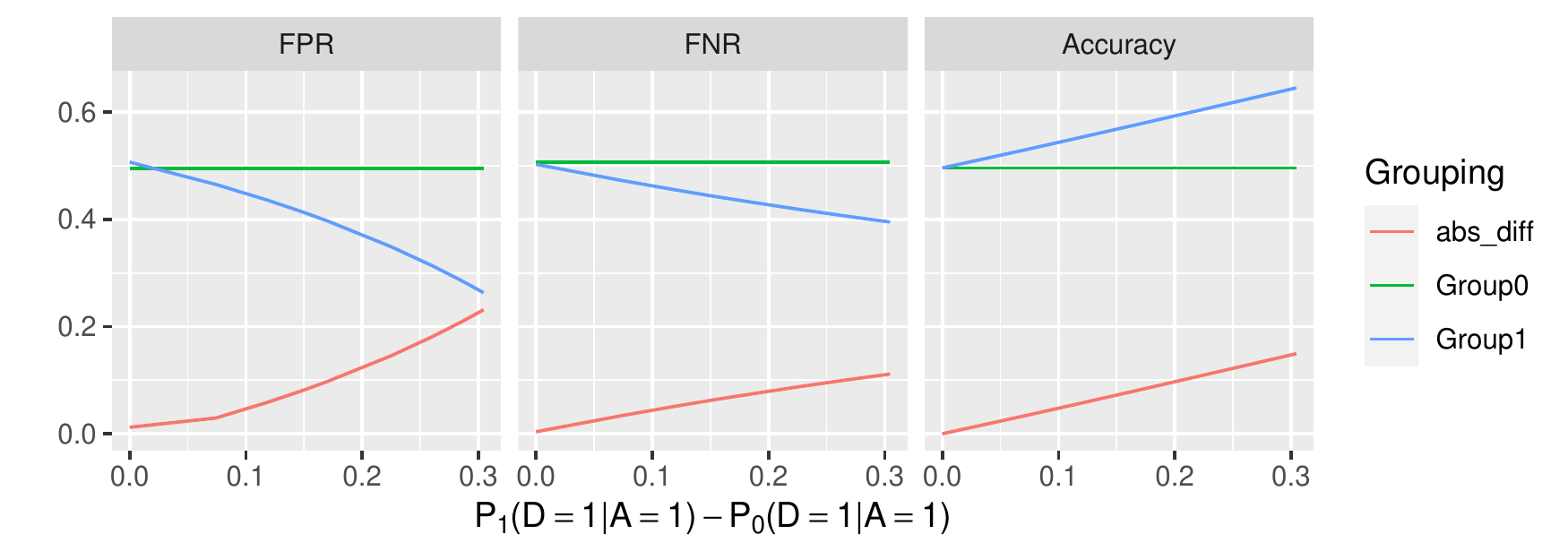}
    \caption{(Predictor 2) False positive rates (FPRs), false negative rates (FNRs), and accuracy for the two groups at $t=\text{post}$. abs\_diff represents the absolute difference between the two groups in the relevant metric. The x-axis represents the change in loan approval rates for group 1 at $t=\text{post}$ relative to $t=\text{pre}$, with all other components of the decision process held constant. $R$ is fair according to equalized odds at $t=\text{pre}$. The greater the impact of $R$ on the decision process, the less fair with respect to equalized odds $R$ becomes at $t=\text{post}$.}
    \label{fig:predictor2}
\end{figure}

\section{Counterfactual accuracy and fairness} \label{sec:counterfactual}
The previous examples illustrate how predictors which satisfy a chosen observable fairness criterion with respect to the data generating process used to train them can fail to satisfy that same criterion when they are deployed. The whole point of introducing a predictor into a decision making setting is to change the decision process in order to improve outcomes. Perversely, the larger the effect of the predictor on decisions, the greater the potential for the fairness and performance of the predictor to differ between training and deployment. These effects occur even when distribution of all other variables, including observed and unobserved covariates and potential outcomes, remains the same.

The use of counterfactual rather than observable outcomes avoids this issue. When a predictor is designed to inform decisions, the outcomes of interest are not the historical observed outcomes under a particular decision process, but rather the outcomes that would occur under available courses of action. In particular, in risk assessment, it is natural to target $Y^{D=0}$, where $D = 0$ represents a baseline course of action such as releasing a defendant or sending a patient home \citep{coston_counterfactual_2020}. In general, $Y^0$ and $Y^1$ represent an individual's responsiveness to different courses of action $D$, but $D$ should not itself affect $Y^0$ or $Y^1$. Under assumption (A1), for example, a predictor which satisfies counterfactual equalized odds ($R \ind A \mid Y^{D=0}$) or counterfactual equality of predictive values ($Y^{D=0} \ind R \mid A$) at $t=\text{pre}$ will also satisfy it at $t=\text{post}$, regardless of changes in the treatment process.

\section{Conclusion} \label{sec:conclusion}
We showed theoretically and in simulated examples that performative prediction settings can induce concept shift, which in turn can affect error rates (false positive and false negative rates), positive and negative predictive values, and accuracy, with respect to observable outcomes. These changes can cause a predictor which satisfies an observable fairness criterion at training time to fail to satisfy this criterion when it is deployed, and they can also cause a predictor to become less accurate in deployment. These phenomena can occur even when the population is identical across the two time points, i.e. when there is no covariate shift, simply as a result of changes in the decision making process. By contrast, concept shift alone has no impact on counterfactual fairness criteria such as counterfactual sufficiency and counterfactual equalized odds.

These results bring into question the value of observable fairness measures in performative contexts, and they add to previous results that suggest that counterfactual outcomes are more natural targets in such settings.

\paragraph{Disclaimer}
{\scriptsize This paper was prepared for informational purposes by the Artificial Intelligence Research group of JPMorgan Chase \& Co. and its affiliates (``JP Morgan''), and is not a product of the Research Department of JP Morgan. JP Morgan makes no representation and warranty whatsoever and disclaims all liability, for the completeness, accuracy or reliability of the information contained herein. This document is not intended as investment research or investment advice, or a recommendation, offer or solicitation for the purchase or sale of any security, financial instrument, financial product or service, or to be used in any way for evaluating the merits of participating in any transaction, and shall not constitute a solicitation under any jurisdiction or to any person, if such solicitation under such jurisdiction or to such person would be unlawful.}

\bibliographystyle{plainnat}
\bibliography{references.bib}

\newpage

\appendix

\section{Appendix: Proofs}
Recall that $R$ is a deterministic function of $(A, X)$, that $(A, X) \subseteq W$, and that $D \ind Y^{D=d} \mid W$ for $d \in \{0, 1\}$.

\begin{proof}[Proof of Proposition \ref{prop:concept_shift} (Concept shift)]We have
\begin{align*}
	\E_t[Y \mid W] &= \E_t[(1 - D)Y^{D=0} + DY^{D=1} \mid W] \tag{by assumption A1} \\
	               &= (1 - \pi_t(W))\mu^{D=0}(W) + \pi_t(W)\mu^{D=1}(W) \tag{since $D \ind Y^{D=d} \mid W$} \\
	               &= \gamma_t(W)
\end{align*}
Now we have
\begin{align*}
	\E_t[Y \mid A, X] &= \E[\E[Y \mid W] \mid A, X] \tag{by iterated expectation} \\
	                  &= \E[\gamma_t(W) \mid A, X]
\end{align*}
\end{proof}

 \begin{proof}[Proof of Proposition \ref{prop:demo_parity} (Prediction rates)]
We have
\begin{align*}
	\E_t[R \mid A] &= \E_t[\E_t[R \mid A, X] \mid A] \\
	               &= \int_\mathcal{X} R d\Pb_t(X \mid A)
\end{align*}
Under assumption (A2), $\Pb_\text{pre}(X \mid A) \equiv \Pb_\text{post}(X \mid A)$, so $\E_t[R \mid A]$ does not change from $t=\text{pre}$ to $t=\text{post}$.
It follows that if $\E_\text{pre}[R \mid A = 0] = \E_\text{pre}[R \mid A = 1]$, then $\E_\text{post}[R \mid A = 0] = \E_\text{post}[R \mid A = 1]$: if the classifier achieves demographic parity at $t=\text{pre}$, then it achieves demographic parity at $t=\text{post}$.
\end{proof}

\begin{proof}[Proof of Proposition~\ref{prop:predictive_values} (Predictive values)]
Following the same logic used in the previous proof, the positive predictive values for group $A = a$ at time $t$ can be expressed as:
\begin{align}                   
	E_t[Y \mid A=a, R=1] &= \frac{\E_t[Y\one\{A=a\}R]}{\E_t[\one\{A = a\}R]} \\
	               &= \frac{\E[\E_t[Y \mid A, X]\one\{A=a\}R]}{\E[\one\{A = a\}R]}
\end{align}
Again, the reasoning for the negative predictive values is analogous, with $(1-R)$ and $(1-Y)$ replacing $R$ and $Y$.
\end{proof}

\begin{proof}[Proof of Proposition \ref{prop:error_rates} (Error rates)]
Starting with the false positive rates, note that we have
\begin{align}
	\E_t[R\one\{A=a\}(1-Y)] &= \E_t[\E_t[R\one\{A=a\}(1-Y) \mid A, Y]] \\
	                        &= \E_t[R \mid A=a, Y=0]\Pb_t(A = a, Y = 0) + 0
\end{align}
We can therefore express the FPR for group $A = a$ at time $t$ as
\begin{align}                    
	E_t[R \mid A=a, Y=0] &= \frac{\E_t[R\one\{A=a\}(1-Y)]}{\Pb(A = a, Y = 0)} \\
	               &= \frac{\E_t[R\one\{A=a\}(1-Y)]}{\E_t[\one\{A = a\}(1-Y)]} \\
	               &= \frac{\E[\E_t[R\one\{A=a\}(1-Y)\mid A, X]]}{\E[\E_t[\one\{A = a\}(1-Y) \mid A, X]]} \\
	               &= \frac{\E[R\one\{A=a\}(1-\E_t[Y \mid A, X])]}{\E[\one\{A = a\}(1-\E_t[Y \mid A, X])]} \label{eq:FPR}
\end{align}
where the third line uses iterated expectation. Note that the outer expectations in the third and fourth lines are not indexed by time, because $\Pb_\text{pre}(A, X) \equiv \Pb_\text{post}(A, X)$ by assumption (A1). The reasoning for the false negative rates is analogous, with $(1-R)$ replacing $R$ and $Y$ replacing $(1-Y)$.
\end{proof}

\end{document}